\theoremstyle{plain}
\newtheorem{theorem}{Theorem}
\newtheorem{proposition}[theorem]{Proposition}
\theoremstyle{definition}
\theoremstyle{remark}
\newtheorem{remark}[theorem]{Remark}
\newtheorem*{theorem*}{Theorem}
\newtheorem*{lemma*}{Lemma}
\newtheorem*{definition*}{Definition}
\newtheorem*{corollary*}{Corollary}
\newtheorem*{remark*}{Remark}
\DeclareMathOperator*{\E}{\mathbb{E}}
\DeclareMathOperator*{\argmax}{argmax}   
\newcommand{\Es}{\E_{s'\sim{} p}}
\newcommand{\s}{\mathcal{S}}
\newcommand{\R}{\mathbb{R}}
\newcommand{\A}{\mathcal{A}}
\newcommand{\T}{\mathcal{T}}
\title{Bootstrapped Reward Shaping}
\author{
    Jacob Adamczyk\textsuperscript{\rm 1,2}, Volodymyr Makarenko\textsuperscript{\rm 3}, Stas Tiomkin\textsuperscript{\rm 4}, Rahul V. Kulkarni\textsuperscript{\rm 1,2}
}
\begin{document}

\maketitle

\begin{abstract}
    In reinforcement learning, especially in sparse-reward domains, many environment steps are required to observe reward information. In order to increase the frequency of such observations, ``potential-based reward shaping'' (PBRS) has been proposed as a method of providing a more dense reward signal while leaving the optimal policy invariant. However, the required ``potential function'' must be carefully designed with task-dependent knowledge to not deter training performance. In this work, we propose a ``bootstrapped'' method of reward shaping, termed BSRS, in which the agent's current estimate of the state-value function acts as the potential function for PBRS. We provide convergence proofs for the tabular setting, give insights into training dynamics for deep RL, and show that the proposed method improves training speed in the Atari suite.
\end{abstract}

\section{Introduction}

The field of reinforcement learning has continued to enjoy successes in solving a variety of problems in both simulation and the physical world. However, the practical use of reinforcement learning in large-scale real-world problems is hindered by the enormous number of environment interactions needed for convergence. Furthermore, even defining the reward functions for such problems (``reward engineering'') has proven to be a significant challenge. Improper design of the reward function can inadvertently change the optimal policy, leading to suboptimal or undesirable behaviors, while attempts to create more dense or interpretable reward signals often come at the cost of task complexity. Historically, earlier attempts to adjust the reward function through ``reward shaping'' indeed resulted in unpredictable and negative changes to the corresponding optimal policy~\cite{bicycle}. 

A significant breakthrough in the field of reward shaping came with the introduction of  Potential-Based Reward Shaping (PBRS) from~\cite{ng_shaping}. PBRS provided a theoretically-grounded method for changing the reward function while keeping the optimal policy fixed. This guarantee framed PBRS as an attractive method of injecting prior knowledge or domain expertise into the reward function through the use of the so-called ``potential function''. PBRS was shown to greatly improve the training efficiency in grid-world tasks with suitably-chosen potential functions (e.g. the Manhattan distance to the goal). However, this potential function is limited to simple goal-reaching MDPs and requires external task-specific knowledge to handcraft. In more complex domains, especially where task-specific knowledge is not already present, a globally applicable method for choosing a potential function would be advantageous.

\citeauthor{ng_shaping} suggested that using the \textit{optimal} state-value function for the choice of potential: ${\Phi(s)\doteq V^*(s)}$ may be useful, as it encodes the optimal values of states, requiring only the remaining non-optimal $Q$-values to be learned. This idea was further studied in later work such as \cite{zou2021learning}. However, this approach presents a circular problem: it requires knowledge of the optimal solution to aid in finding the optimal solution, making it impractical for single-task learning scenarios.

In this work, we propose a novel approach to choosing the potential function termed BootStrapped Reward Shaping (BSRS). Rather than using the (unknown) optimal value function $V^*$ itself for the potential function, we use the next most reasonable choice for a state-dependent function readily available to the agent: the \textit{current estimate of the optimal state-value function}, $V^{(n)}$. This approach leverages the agent's continually improving estimate of the optimal state value function ($V^*$) at step $n$ to introduce a more dense reward signal.
By bootstrapping from the agent's best current estimate of $V^*$, we introduce an adaptive reward signal that evolves with the agent's understanding of the task, while remaining tractable.

The use of a time-dependent potential function raises important questions of convergence, which we address theoretically and empirically in the following sections. Next, we provide experiments in both tabular and continuous domains with the use of deep neural networks. We find that the use of this simple but dynamical potential function can improve sample complexity, even in complex image-based Atari tasks \cite{ALE}. Moreover, the proposed algorithm requires \textbf{changing only a single line of code} in existing value-based algorithms.

The broader implications of the present work extend beyond immediate performance improvements. By providing a general, adaptive approach to reward shaping, BSRS opens new avenues for tackling complex RL problems where effective reward design is challenging. Our work also contributes to the active discussion on (1) the role of reward shaping in RL and (2) methods for leveraging an agent's growing knowledge to accelerate learning. The present work also opens new directions for future research, which we discuss at the end of the paper. Our code is publicly available at~\url{https://github.com/JacobHA/ShapedRL}.




Overall, the main contributions of this work can be summarized as follows:
\begin{itemize}
    \item We propose a novel mechanism of dynamic reward shaping based only on the agent's present knowledge: BSRS.
    \item We prove the convergence of this new method.
    \item We show an experimental advantage in using BSRS in tabular grid-worlds, the Arcade Learning Environment, and locomotion tasks.
\end{itemize}

\section{Background}
In this section we will introduce the relevant background material for reinforcement learning and potential-based reward shaping (PBRS). 
\subsection{Reinforcement Learning}
We will consider discrete or continuous state spaces and discrete action spaces (this restriction is for an exact calculation of the state-value function; in the final sections we discuss extensions for continuous action spaces). The RL problem is then modeled by a Markov Decision Process (MDP), which we represent by the tuple $\langle \s,\A,p,r,\gamma \rangle$ with state space $\s$; action space $\A$; potentially stochastic transition function (dynamics) $p: \s \times \A \to \s$; bounded, real reward function $r: \s \times \A \to \mathbb{R}$; and the discount factor $\gamma \in [0,1)$.

The defining objective of reinforcement learning (RL) is to maximize the total discounted reward expected under a policy $\pi$. That is, to find a policy $\pi^*$ which maximizes the following sum of expected rewards:
\begin{equation}\label{eq:pi_star_defn}
    \pi^* = \arg\max_{\pi}
    \E_{\tau \sim{}p,\pi}
    \left[ \sum_{t=0}^{\infty} \gamma^{t} r(s_t,a_t) 
         \right].
\end{equation}

In the present work, we restrict our attention to value-based RL methods, where the solution to the RL problem is equivalently defined by its optimal action-value function ($Q^*(s,a)$). The aforementioned optimal policy $\pi^*(a|s)$ is derived from $Q^*$ through a greedy maximization over actions (Eq.~\eqref{eq:greedy-pi}). The optimal value function can be obtained by iterating the following recursive Bellman equation until convergence:
\begin{equation}
    Q^*(s,a) = r(s,a) + \gamma \mathbb{E}_{s' \sim{} p(\cdot|s,a)} \max_{a'} \left( Q^*(s',a') \right).
    \label{eq:bellman}
\end{equation}
In the tabular setting, the exact Bellman equation shown above can be applied until convergence within some numerical tolerance. In the function approximation setting (e.g. with deep neural nets), the $Q$ table is replaced by a parameterized function approximator, denoted $Q_\theta$, and the temporal difference (TD) loss is minimized instead:
\begin{equation}
    \mathcal{L}_\theta = \sum_{s,a,r,s' \in \mathcal{D}}\biggr|Q_\theta(s,a) - \left(r + \gamma \max_{a'} Q_{\bar{\theta}}(s',a')\right)\biggr|^2
\end{equation}
via stochastic gradient descent on the ``online'' value network's parameters, ${\theta}$. The target value network $Q_{\bar{\theta}}$ is used to compute the next step's value. Its parameters ${\bar{\theta}}$ represent a lagging version of the online parameters, typically calculated through periodic freezing of the online network or more general Polyak averaging. During training, experience from the environment is collected by an $\varepsilon$-greedy exploration policy and stored in a FIFO replay buffer $\mathcal{D}$. Uniformly sampled mini-batches are sampled from $\mathcal{D}$ to compute the loss $\mathcal{L}_\theta$.

Once the optimal action-value function is obtained, the unique deterministic optimal policy may be immediately derived from it as follows:
\begin{equation}\label{eq:greedy-pi}
    \pi^*(a|s) = \argmax_a Q^*(s,a).
\end{equation}

In the following section, we discuss methods for altering the reward function $r(s,a)$ in a way that leaves the optimal policy $\pi^*$ unchanged.



\subsection{Potential-Based Reward Shaping}


With the goal of efficiently learning an optimal policy for a given reward function, one may wonder how the reward function can be adjusted\footnote{The literature sometimes refers to ``reward shaping'' as arbitrary changes to the reward function. To avoid confusion, we will use ``shaping'' only in the context of PBRS.} to enhance training efficiency. Arbitrary ``reward engineering'' may improve performance \cite{hu2020learning} but is not guaranteed to yield the same optimal policy. Indeed, arbitrary changes to the reward function may result in the agent performing reward ``hacking'' or ``hijacking'', having detrimental effects on solving the originally posed task. Examples of this in prior work include the discussion on cycles in ~\cite{ng_shaping} and more involved examples were later given by \cite{zhang2021importance, skalse2022defining}. Instead of arbitrary changes, a specific additive\footnote{In the current setting of un-regularized $Q$-learning, positive scalar multiplication of the reward function also leaves the optimal policy invariant, but we focus on additive changes in this work.} method of changing the reward function, proved by \cite{ng_shaping} to leave the optimal policy invariant, is given by the following result: \textit{potential-based} reward shaping (PBRS).

\begin{theorem}[\citet{ng_shaping}]
Given task ${\T} = \langle \s,\A,p,{r},\gamma \rangle$ with optimal policy ${\pi}^*$, then the task $\widetilde{\T}$ 
with reward function
\begin{equation}\label{eq:eq for rewards in thm rwd shaping}
    \widetilde{r}(s,a) = {r}(s,a) + \gamma \Es \Phi(s') - \Phi(s) 
\end{equation}
has the optimal policy $\widetilde{\pi}^* = {\pi}^*$, and its optimal value functions satisfy
\begin{equation}\label{eq:in thm rwd shaping q = q - phi}
    \widetilde{Q}^*(s,a) = {Q}^*(s,a) - \Phi(s)
\end{equation}
\begin{equation}\label{eq:in thm rwd shaping v = v - phi}
    \widetilde{V}^*(s) = {V}^*(s) - \Phi(s)
\end{equation}
for a bounded, but otherwise arbitrary function $\Phi \colon \s \to \mathbb{R}$.
\label{thm:ng}
\end{theorem}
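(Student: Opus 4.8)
The plan is to verify directly that the candidate function $\widehat{Q}(s,a) \doteq Q^*(s,a) - \Phi(s)$ is the fixed point of the Bellman optimality operator for the shaped task $\widetilde{\mathcal{T}} = \langle \mathcal{S},\mathcal{A},p,\widetilde{r},\gamma\rangle$, and then to read off the remaining two claims from this identity. Since $r$ is bounded and $\gamma < 1$, $Q^*$ is bounded; together with the boundedness of $\Phi$ this makes $\widehat{Q}$ bounded, so it lies in the space on which the Bellman optimality operator is a $\gamma$-contraction in the supremum norm. Hence any bounded function satisfying the optimality equation for $\widetilde{\mathcal{T}}$ is necessarily $\widetilde{Q}^*$, and it suffices to check that $\widehat{Q}$ satisfies it.

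The computation itself is a short algebraic check. I would plug $\widehat{Q}$ into the right-hand side of Eq.~\eqref{eq:bellman} written for $\widetilde{\mathcal{T}}$ with reward $\widetilde{r}$, and use that $\Phi(s')$ does not depend on $a'$ to pull it out of the inner maximization: $\max_{a'}\bigl(Q^*(s',a') - \Phi(s')\bigr) = \max_{a'} Q^*(s',a') - \Phi(s')$. Then the term $+\gamma\,\mathbb{E}_{s'\sim p}\Phi(s')$ coming from $\widetilde{r}$ cancels the term $-\gamma\,\mathbb{E}_{s'\sim p}\Phi(s')$ coming from the shaped bootstrap, leaving $r(s,a) + \gamma\,\mathbb{E}_{s'\sim p}\max_{a'} Q^*(s',a') - \Phi(s)$, which by Eq.~\eqref{eq:bellman} for the original task equals $Q^*(s,a) - \Phi(s) = \widehat{Q}(s,a)$. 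This establishes Eq.~\eqref{eq:in thm rwd shaping q = q - phi}.

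Given $\widetilde{Q}^* = Q^* - \Phi$, the last two claims follow immediately. Taking $\max_a$ of both sides and again using that $\Phi(s)$ is independent of $a$ gives $\widetilde{V}^*(s) = \max_a\widetilde{Q}^*(s,a) = \max_a Q^*(s,a) - \Phi(s) = V^*(s) - \Phi(s)$, which is Eq.~\eqref{eq:in thm rwd shaping v = v - phi}. Likewise, $\argmax_a\widetilde{Q}^*(s,a) = \argmax_a\bigl(Q^*(s,a) - \Phi(s)\bigr) = \argmax_a Q^*(s,a)$, so the greedy policy of Eq.~\eqref{eq:greedy-pi} is unchanged, i.e. $\widetilde{\pi}^* = \pi^*$.

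The proof presents no serious obstacle; the only points requiring care are (i) invoking uniqueness of the Bellman fixed point, which is where boundedness of $\Phi$ is actually used, and (ii) noting that the expectation $\mathbb{E}_{s'\sim p}$ appearing in $\widetilde{r}(s,a)$ is taken over the same transition distribution $p(\cdot\mid s,a)$ that drives the bootstrap, so the two $\Phi$-expectation terms genuinely cancel pointwise in $(s,a)$. An alternative route that avoids the fixed-point appeal is to expand $\widetilde{Q}^\pi(s,a)$ as the expected discounted sum of $\widetilde{r}$ along trajectories generated by an arbitrary policy $\pi$: the shaping terms telescope (the $\gamma^{t+1}\Phi(s_{t+1})$ contribution of step $t$ cancels the $-\gamma^{t+1}\Phi(s_{t+1})$ contribution of step $t+1$ in expectation, and the tail $\gamma^{T+1}\Phi(s_{T+1}) \to 0$), leaving $\widetilde{Q}^\pi = Q^\pi - \Phi$ for every $\pi$, from which all three conclusions again follow. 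I would mention this telescoping argument as a remark but carry out the Bellman-operator version as the main proof.
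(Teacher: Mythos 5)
Your proposal is correct, but note that the paper itself does not prove this statement: it is imported verbatim from Ng et al.\ and used as a black box, with only an intuitive remark about the telescoping of $\gamma\Phi(s_{t+1})-\Phi(s_t)$ along trajectories. Your main argument---verifying that $\widehat{Q}(s,a)=Q^*(s,a)-\Phi(s)$ satisfies the Bellman optimality equation of the shaped task, invoking boundedness of $r$ and $\Phi$ so that $\widehat{Q}$ lies in the space where that operator is a $\gamma$-contraction with a unique fixed point, and then reading off $\widetilde{V}^*=V^*-\Phi$ and the invariance of $\argmax_a$ because $\Phi(s)$ is constant in $a$---is a complete and standard proof of the sufficiency direction actually asserted here, and the cancellation you flag between the $+\gamma\,\E_{s'\sim p}\Phi(s')$ term of $\widetilde{r}$ and the $-\gamma\,\E_{s'\sim p}\Phi(s')$ term of the shifted bootstrap is exactly the right pointwise check since both expectations are over the same $p(\cdot\mid s,a)$. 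Your alternative telescoping route, giving $\widetilde{Q}^\pi=Q^\pi-\Phi(s)$ for every policy $\pi$, is the version closest to the paper's stated intuition (and to Ng et al.'s original argument); it is slightly stronger in that it shows the entire policy ordering, not just the optimal greedy policy of Eq.~\eqref{eq:greedy-pi}, is preserved, at the cost of handling the vanishing tail term $\gamma^{T+1}\Phi(s_{T+1})\to 0$, which again uses boundedness of $\Phi$. Either route is acceptable; there is no gap.
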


Intuitively, the form $\gamma \Phi(s') - \Phi(s)$ represents a discrete-time derivative along trajectories in the MDP~\cite{jenner2022calculus}. Including this shaping term in a trajectory's expected return (Eq.~\eqref{eq:pi_star_defn}), subsequent terms have a telescopic cancellation, leaving behind only $\Phi(s_0)$. 
This leads to the predictable effect on the value functions seen in Theorem~\ref{thm:ng}.

The results of Theorem~\ref{thm:ng} show that the RL problem designer has the freedom to choose \textit{any} $\Phi$ to shape the reward in a way that is consistent with the original MDP's solution. However, it importantly does not give any specific prescription for a ``preferred'' choice of $\Phi: \s\to\R$, which is up to the user to define. In the following section, we will discuss previously studied notions of preferred potentials, and provide extensions of the PBRS framework relevant to the present work.

\section{Prior Work}

The field of reward shaping in reinforcement learning has a rich history, with roots tracing back to early work on accelerating learning through reward design \cite{MATARIC1994181, bicycle}. However, the seminal work of \cite{ng_shaping} marked a significant turning point by introducing Potential-Based Reward Shaping (PBRS). This approach provided a theoretical foundation for modifying rewards without altering the optimal policy, a crucial property for maintaining the ``correct'' or ``desirable'' agent behavior at convergence.

The key insight of \cite{ng_shaping} was that shaping rewards based on a potential function as in Theorem~\ref{thm:ng} preserves the optimal policy. This result was further extended by \cite{wiewiora2003potential}, who proved the equivalence between PBRS and Q-value initialization, providing additional theoretical justification and understanding of the approach. Our work builds directly on these foundations, leveraging the policy invariance property of PBRS while introducing a novel, adaptive approach to defining the potential function.

Following the work of \citeauthor{ng_shaping}, several studies have explored extensions and applications of PBRS which we detail below. Firstly, Dynamic PBRS~\cite{devlin2012dynamic} extended PBRS to a dynamic setting where the potential function is time-dependent within the MDP\footnote{Here, ``time'' refers to the transition step in the MDP itself, rather than the training step in the algorithm.}. We note an important distinction from this work is the meaning of a ``dynamical'' potential function. In \cite{devlin2012dynamic}, the potential function changes at each discrete time-step in the MDP: $\Phi(s,t)$, and they prove convergence to the optimal policy despite a non-convergent potential function. However, their study mainly focuses on randomly generated potential functions and does not show improved performance. On the other hand, we study a potential function $\Phi(s)$ that is fixed across time-steps in the MDP, but varies at each training step. Thus, in our context, if many environment steps occur between gradient steps, the same potential $\Phi$ is used until the next update. We also prove convergence of our method and show empirically that the proposed shaping method (BSRS) can lead to faster training.

In the setting of entropy-regularized (``MaxEnt'') RL, \cite{centa2023soft, Adamczyk_AAAI} established connections between the prior policy, PBRS, compositionality, and soft Q-learning; broadening the theoretical understanding of reward shaping. Furthermore, their analysis shows that the degree of freedom used for shaping can be derived from the normalization condition on the optimal policy, or equivalently from an arbitrary ``base task''. Because of these results, our analysis readily extends to the more general entropy-regularized setting. For simplicity, this work will focus on the un-regularized case. 

PBRS has also assisted in furthering the theoretical understanding of bounds on the value function. For example, \cite{gupta2022unpacking, Adamczyk_UAI, Adamczyk_RLC} explored the relationship between PBRS and value function bounds, providing insights into the theoretical and experimental utility of shaping. PBRS has also been explored in the average-reward setting~\cite{jiang2021temporal, naik2024reward} where the latter work's ``dynamic'' but constant potential function can be connected to our dynamic and state-dependent potential function (where they use the mean policy reward instead of the associated value function to define the potential).

The field of inverse reinforcement learning (IRL), concerned with learning the underlying reward signal from expert demonstrations, has also benefited from the ideas of PBRS. As PBRS effectively describes the equivalence class of reward functions (with respect to optimal policies), IRL must take into account the potentially un-identifiable differences between seemingly different reward functions in the same equivalence class. In the IRL setting this has been studied thoroughly by e.g. \cite{cao2021identifiability}. Later, \cite{gleave2020quantifying, jenner2022general, wulfe2022dynamics} used this insight to develop a notion of reward distances, demonstrating the broader applicability of the PBRS framework for identifying the salient differences in potentially similar reward functions.


Despite the simplicity and applicability of PBRS, a persistent challenge has been the design of effective potential functions without heuristics. \citeauthor{ng_shaping} suggested using the optimal state-value function $V^*(s)$ as the potential, further explored by \cite{zou2021learning, cooke2023toward} in the meta-learning multi-task setting. However, in the single-task setting this approach presupposes knowledge of the solution, limiting its practical applicability.

Other approaches have included heuristic-based potentials \cite{ng_shaping},
learning-based potentials (especially in the hierarchical setting) \cite{grzes2010online, gao2015potential, ma2024highly}, and random dynamic potentials \cite{devlin2012dynamic}. Although these approaches have found utility in their respective problem settings, BSRS provides a universally applicable potential function which can be computed without requiring additional samples or training steps.
\section{Theory\label{sec:theory}}
\begin{figure*}
\centering
\begin{minipage}[t]{0.8\textwidth}
  \includegraphics[width=\linewidth]{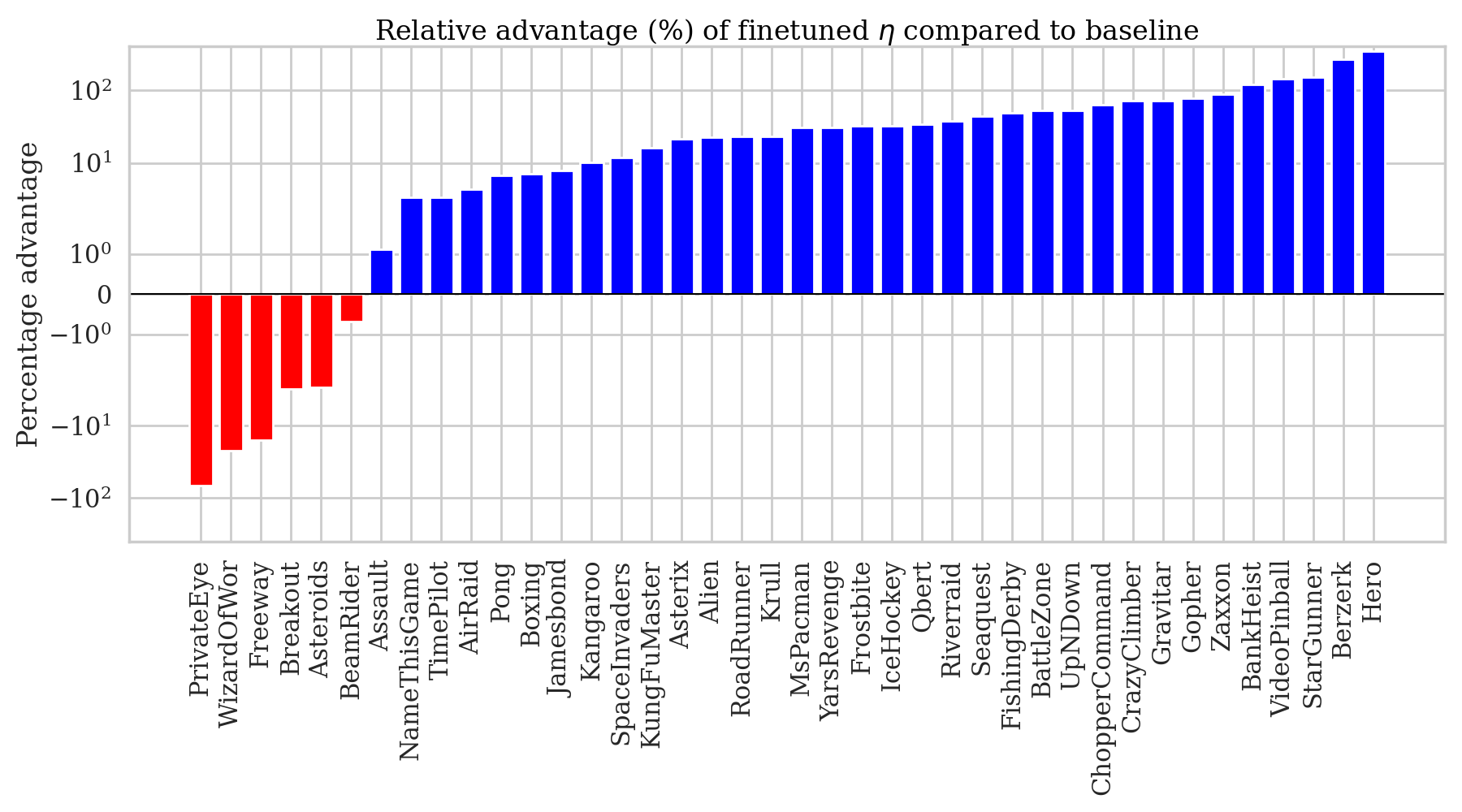}
  \caption{Relative advantage of the finetuned shape scale versus baseline ($\eta=0$) performance. Each environment, for each shape-scale parameter ${\eta\in\{0,0.5,1,2,3,5,10\}}$, is run with five seeds, and the best (in terms of mean score) non-zero $\eta$ value is chosen.  }
  \label{fig:env-comparison}
\end{minipage}
\end{figure*}
In this section, we derive some theoretical properties of BSRS. Specifically, we show that under appropriate scaling values (``shape-scales'') $\eta$, continual shaping in fact converges despite constant changes in the potential function during training. To prove this result, we employ usual techniques for contraction mappings. For this algorithm, we can directly calculate the asymptotic value functions and potential functions, which can be written in terms of the original un-shaped MDP's optimal value function. 
We then show that BSRS is \textit{not} equivalent to a constant shaping mechanism for any potential function; confirming that BSRS is a novel technique that cannot be replicated by a finetuned choice of potential function. Finally, we provide alternative interpretations of BSRS by re-considering the implied parameter updates under TD(0) and SARSA(0) learning rules.
\begin{tcolorbox}[colback=blue!5!white,colframe=blue!75!black]
\begin{theorem}\label{thm:bsrs}
Denote the optimal value functions for the unshaped MDP as $Q_0(s,a)$ and $V_0(s)$. Consider an algorithm which continuously reshapes the reward function at each step $n$ with the potential $\Phi^{(n)}(s)=\eta \max_a Q^{(n)}(s,a)$. Then, the operator
\begin{align*}
    \mathcal{T} Q(s,a) &= \left(r(s,a) + \gamma \E_{s'\sim{} p} \Phi^{(n)}(s') - \Phi^{(n)}(s) \right) \\
    &+ \gamma \E_{s'\sim{}p}\max_{a'} Q(s',a')
\end{align*}
remains a contraction mapping for values of ${\eta~\in (-1, (1-\gamma)/(1+\gamma))}$. The shaping potential and value function converge to: 
\begin{equation}
\Phi^{(\infty)}(s)=\frac{V_0(s)}{1+\eta},    
\end{equation}
\begin{equation}
    Q^{(\infty)}(s,a)=Q_0(s,a)-\frac{\eta}{1+\eta} V_0(s).
\end{equation}
\end{theorem}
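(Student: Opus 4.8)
The plan is to exploit that, although $\Phi^{(n)}$ carries a step index, the iteration $Q^{(n+1)}=\mathcal{T}Q^{(n)}$ — in which $\Phi^{(n)}$ is built from the very iterate $Q^{(n)}$ being fed into $\mathcal{T}$ — is governed by a single \emph{stationary} operator. Substituting $\Phi^{(n)}(s)=\eta\max_{a}Q(s,a)$ into the definition and collecting the two $\gamma\Es\max_{a'}Q(s',a')$ contributions, this operator is
\begin{equation*}
(\mathcal{T}Q)(s,a) = r(s,a) - \eta\max_{a'}Q(s,a') + \gamma(1+\eta)\,\Es\max_{a'}Q(s',a'),
\end{equation*}
which maps the Banach space of bounded functions $\s\times\A\to\R$ with the sup norm into itself because $r$ is bounded. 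The whole statement then reduces to analyzing this one map.

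For the contraction property I would take bounded $Q_1,Q_2$, subtract their images, and use that $s\mapsto\max_{a}Q(s,a)$ is $1$-Lipschitz in the sup norm, i.e. $|\max_{a}Q_1(s,a)-\max_{a}Q_2(s,a)|\le\|Q_1-Q_2\|_\infty$ for every $s$. This gives
\begin{equation*}
\|\mathcal{T}Q_1-\mathcal{T}Q_2\|_\infty \le \bigl(|\eta|+\gamma|1+\eta|\bigr)\,\|Q_1-Q_2\|_\infty ,
\end{equation*}
so it remains to determine when $|\eta|+\gamma|1+\eta|<1$. I would split into the case $\eta\ge 0$, where the coefficient equals $\eta(1+\gamma)+\gamma$ and the inequality becomes $\eta<(1-\gamma)/(1+\gamma)$, and the case $-1<\eta<0$, where $1+\eta>0$ so the coefficient equals $\gamma-\eta(1-\gamma)$ and the inequality becomes $\eta>-1$; their union is exactly the stated interval. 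Banach's fixed-point theorem then gives a unique fixed point $Q^{(\infty)}$ with $Q^{(n)}\to Q^{(\infty)}$ (sup norm) from any bounded start, and $\Phi^{(n)}=\eta\max_a Q^{(n)}\to\eta\max_a Q^{(\infty)}$ by the same Lipschitz bound.

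To identify the limit I would invoke Theorem~\ref{thm:ng}. At the fixed point, applying $\mathcal{T}$ to $Q^{(\infty)}$ is the same as applying the PBRS-shaped Bellman operator of Theorem~\ref{thm:ng} (with reward~\eqref{eq:eq for rewards in thm rwd shaping} and the now-frozen potential $\Phi^{(\infty)}(s)=\eta\max_a Q^{(\infty)}(s,a)$) to $Q^{(\infty)}$; hence $Q^{(\infty)}$ is the optimal action-value function of that shaped task, and Theorem~\ref{thm:ng} gives $Q^{(\infty)}=Q_0-\Phi^{(\infty)}=Q_0-\eta\max_a Q^{(\infty)}$. Since the correction term is action-independent, taking $\max_a$ of both sides produces the per-state scalar relation $V^{(\infty)}(s)=V_0(s)-\eta V^{(\infty)}(s)$, whence $V^{(\infty)}(s)=V_0(s)/(1+\eta)$ and therefore $Q^{(\infty)}(s,a)=Q_0(s,a)-\tfrac{\eta}{1+\eta}V_0(s)$, i.e. the claimed limits. (Equivalently: substitute the ansatz $Q^{(\infty)}=Q_0-cV_0$ into $\mathcal{T}Q^{(\infty)}=Q^{(\infty)}$, cancel $r$ using the unshaped Bellman equation~\eqref{eq:bellman} for $Q_0$, and match the coefficients of $V_0(s)$ and of $\Es V_0(s')$ to obtain $c=\eta/(1+\eta)$.)

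The Lipschitz estimate and the ansatz computation are routine; the crux — and the main obstacle — is the first step, namely realizing that the bootstrapped, nominally non-stationary reshaping collapses to a single stationary operator, which is what makes a clean Banach argument possible (in contrast to dynamic PBRS, where the potential does not track the iterate and one must reason about a moving target). The one calculation that needs care is the endpoint analysis: it relies on $1+\eta$ remaining positive throughout $(-1,(1-\gamma)/(1+\gamma))$, which is also what legitimizes dividing by $1+\eta$ when solving for $V^{(\infty)}$.
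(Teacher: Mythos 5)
Your proof is correct and follows essentially the same route as the paper: you collapse the bootstrapped shaping into the single stationary operator $\mathcal{T}Q = r - \eta\max_a Q + \gamma(1+\eta)\Es\max_{a'}Q$, establish the sup-norm contraction factor $|\eta|+\gamma|1+\eta|$ with the same case analysis yielding $\eta\in\left(-1,\tfrac{1-\gamma}{1+\gamma}\right)$, and apply Banach's fixed-point theorem. The only minor difference is in identifying the limit, which you do by invoking Theorem~\ref{thm:ng} with the frozen potential $\Phi^{(\infty)}$ (your parenthetical ansatz is exactly the paper's direct self-consistency verification); note that, like the paper's own proof, you arrive at $V^{(\infty)}=\tfrac{V_0}{1+\eta}$ and $\Phi^{(\infty)}=\tfrac{\eta}{1+\eta}V_0$, so the absence of the factor $\eta$ in the theorem's displayed formula for $\Phi^{(\infty)}$ is an inconsistency in the statement itself rather than a gap in your argument.
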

\end{tcolorbox}
For completeness, we provide the proof of this result below to give further insight on the mechanism at play and the Theorem's conclusions.

\begin{proof}
First we prove that the stated operator is indeed a contraction, before calculating the asymptotic potential and value functions. Consider step $n$ of training, where the new value function $Q^{(n)}$ is calculated from the previous iteration $Q^{(n-1)}$ via the Bellman backup equation, using PBRS with potential function $\Phi^{(n-1)}$:
\begin{align*}
   Q^{(n)}(s,a) &= \left(r(s,a) - \Phi^{(n-1)}(s) \right) + \\
   &\gamma \E_{s'\sim{} p(\cdot|s,a)}\left( \max_{a'} Q^{(n-1)}(s',a') + \Phi^{(n-1)}(s') \right).
\end{align*}
With the choice 
of potential specified by BSRS: ${\Phi^{(n-1)}(s)=\eta \max_a Q^{(n-1)}(s,a)}$, the corresponding Bellman operator above may be written as: 
\begin{align*}
    \mathcal{T} Q(s,a) &= \left(r(s,a) - \eta \max_a Q(s,a) \right) \\
    &+ \gamma(1+\eta) \E_{s'\sim{} p(\cdot|s,a)} \max_{a'} Q(s',a').
\end{align*}
\begin{figure*}[t]
\begin{minipage}[t]{1.0\textwidth}
  \includegraphics[width=\linewidth]{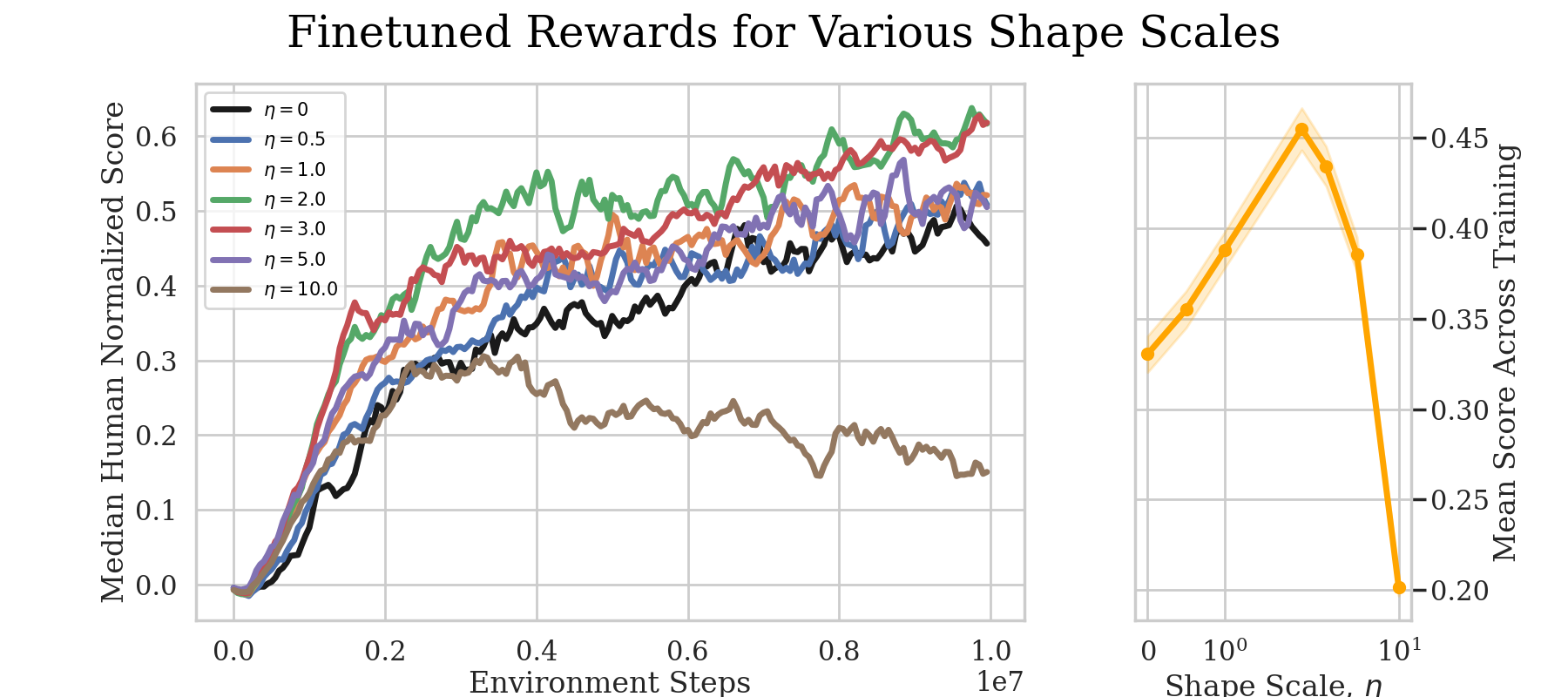}
  \caption{Learning curves for 10M steps in the Atari suite. We take the median human-normalized score over all $40$ environments (shown in Figure~\ref{fig:env-comparison}). In the right panel, a sensitivity plot is given, showing that an intermediate value of $\eta\approx2$ gives the best performance in aggregate (mean of the median human-normalized score over all environment steps).}
  \label{fig:rwd-curves}
\end{minipage}
\end{figure*}
To ensure that the Bellman operator with shaping (denoted $\mathcal{T}$ above) is indeed a contraction, we must verify that each application of the shaped Bellman operator reduces the distance between functions in the sup-norm. That is, we require ${|\mathcal{T}U - \mathcal{T}W|_\infty \leq \alpha |U-W|_\infty}$ to hold for some $\alpha \in [0,1)$, for all bounded functions $U$ and $W$. Proceeding directly with the calculation we find:

\begin{align*}
    &\big|\mathcal{T} U - \mathcal{T} W\big|_\infty \\
    &= \biggr| - \eta \max_a U(s,a) + \gamma (1+\eta) \Es \max_{a'} U(s',a') \notag \\
    &\quad + \eta \max_a W(s,a) - \gamma(1+\eta) \Es \max_{a'} W(s',a') \biggr|_\infty \notag \\
    &= \biggr| - \eta \left(\max_a U(s,a) - \max_a W(s,a) \right) \notag \\
    &\quad + \gamma (1+\eta)\Es \left( \max_{a'} U(s',a') - \max_{a'} W(s',a') \right) \biggr|_\infty \notag \\
    &\leq |\eta| \times \biggr|\max_a U(s,a) - \max_a W(s,a) \biggr|_\infty \notag \\
    &\quad + \gamma |1+\eta| \times \biggr|\Es \left( \max_{a'} U(s',a') - \max_{a'} W(s',a') \right) \biggr|_\infty \notag \\
    &\leq |\eta| \times \big|U - W\big|_\infty \notag \\
    &\quad + \gamma|1+\eta| \times \mathbb{E}_{s'\sim{} p} \biggr|\max_{a'} U(s',a') - \max_{a'} W(s',a') \biggr|_\infty \notag \\
    &\leq (|\eta| + \gamma |1 + \eta|) \big|U - W\big|_\infty,
\end{align*}

\noindent where the third line follows from the triangle inequality, and remaining lines follow the typical proof for the Bellman operator being a contraction.

To ensure $\mathcal{T}$ is a contraction, the constant factor must satisfy ${\alpha \doteq |\eta| + \gamma|1 + \eta| \in [0,1)}$, which can be solved for $\eta$: 
\begin{equation*}
    \eta~\in \left(-1, \frac{1-\gamma}{1+\gamma}\right).
\end{equation*}
\newline

With the contractive nature of $\mathcal{T}$ verified, we invoke Banach's fixed point theorem stating that $\mathcal{T}$ has a unique fixed point. 
Denoting $\mathcal{T}^\infty Q(s,a)=Q^\infty(s,a)$ as the fixed point and $V^\infty(s)=\max_a Q^\infty(s,a)$ as the 
associated state value function, we can find the corresponding asymptotic
potential function ${\Phi^\infty(s)}$ by solving the following self-consistent equation,
\begin{equation}
   \Phi^\infty(s) = \eta \max_a \mathcal{T} Q^\infty(s,a)=\eta V^\infty(s)
\end{equation}
where $Q^\infty$ depends implicitly on $\Phi^\infty$. More explicitly, we can calculate the right-hand side of this self-consistent equation starting from $Q^\infty$ and then taking a maximum over action space:
\begin{align*}
    \mathcal{T}Q^\infty(s,a) &= \left(r(s,a) + \gamma \eta \Es V^{\infty}(s') - \eta V^{\infty}(s) \right) \\
    &\quad +\gamma \Es \max_{a'} Q^\infty(s',a') \\
    &= r(s,a) + \gamma(1+\eta) \Es V^\infty(s') - \eta V^{\infty}(s).
\end{align*}
Taking the max over actions, we have:
\begin{align*}
    (1+\eta) V^\infty(s) &= \max_a \left\{ r(s,a) + \gamma (1+\eta) \Es V^{\infty}(s')  \right\}.
\end{align*}
Now we notice that a similar equation is solved by $V_0$ when $\eta=0$. Thus, if we assume the form $V_0=(1+\eta)V^\infty$, then the previous equation is satisfied:

\begin{align*}
    (1+\eta) V^\infty(s) &= \max_a \left\{ r(s,a) + \gamma \Es V_0(s')  \right\} \\
    (1+\eta) V^\infty(s) &= V_0(s) \\
    V^\infty(s) &= \frac{V_0(s)}{1+\eta},
\end{align*}
which is consistent with the aforementioned assumption. Now to solve for the fixed point, $Q^\infty$, we write out the backup equation from above, and insert the known expression for $V^\infty$:
\begin{align*}
    Q^\infty(s,a) &= r(s,a) + \gamma(1+\eta) \Es V^\infty(s') - \eta V^{\infty}(s) \\
    &= r(s,a) + \gamma \Es V_0(s') - \frac{\eta}{1+\eta} V_0(s) \\
    &= Q_0(s,a) - \frac{\eta}{1+\eta} V_0(s) 
\end{align*}


Calculating the associated asymptotic potential gives, as stated above,
\begin{align*}
\Phi^{\infty}(s) &= \eta \max_a Q_\infty(s,a) = \frac{\eta}{1+\eta}V_0(s).
\end{align*}
\end{proof}

We note that due to Theorem 1 of \cite{Adamczyk_AAAI}, the previous result readily extends to the case of entropy-regularized RL.

\subsubsection{Initialization}
The previous proofs, in addition to the results of \cite{wiewiora2003potential}, suggest that the shaping method used is equivalent to a particular $Q$-table initialization, based on the shaping function:
\begin{equation*}
    Q^{(0)}(s,a) = \Phi^\infty(s) = \frac{V^*(s)}{1+\eta}.
\end{equation*}

However, the results in \cite{wiewiora2003potential} were only proven for \textit{static} potentials which do not change over the course of training. In fact, we find that BSRS is \textit{not} equivalent to shaping with \textit{any} static potential, as clarified in the following remark. (Proofs of this remark and other statements, alongside further experimental details, are given in the Appendix.)

\begin{remark}\label{rmk:static-pot}
    It can be shown that no equivalent static potential function exists with the same resulting updates as our BSRS potential.
\end{remark}

\subsection{Interpretation as Un-Shaped Learning Problem}

The work of \cite{amit2020discount} showcased the connection between regularized algorithms and variable discount factors. In this section we take inspiration from their results and proof techniques to provide similar results for our setting.

\begin{proposition}[Scaled TD(0) Equivalence]
Let $\theta$ denote the parameters of the value function $V_\theta(s)$. The self-shaping algorithm, with shape-scale $\eta$ produces the same set of updates as an un-shaped problem setting, with a rescaled learning rate $\alpha \to \alpha(1+\eta)$ and rescaled reward function $r \to r/(1+\eta)$.
\end{proposition}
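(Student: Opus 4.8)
The plan is to write the BSRS-shaped TD(0) update explicitly, substitute the bootstrapped potential, and show by a one-line rearrangement that the resulting parameter increment coincides with a vanilla TD(0) increment on a rescaled MDP. For a sampled transition $(s_t, r_t, s_{t+1})$, the shaped semi-gradient TD(0) update at training step $n$ is
\[
\theta \;\leftarrow\; \theta + \alpha\Big(\widetilde{r}_t + \gamma V_\theta(s_{t+1}) - V_\theta(s_t)\Big)\,\nabla_\theta V_\theta(s_t),
\]
with $\widetilde{r}_t = r_t + \gamma\,\Phi^{(n)}(s_{t+1}) - \Phi^{(n)}(s_t)$. Following BSRS, the potential is read off the agent's current value estimate, $\Phi^{(n)}(s) = \eta\,V_\theta(s)$, and — exactly as for any additive reward bonus — the shaping term is treated as part of the (constant) regression target, so the gradient acts only through $V_\theta(s_t)$.

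Substituting $\Phi^{(n)}(s)=\eta V_\theta(s)$ into $\widetilde{r}_t$ and collecting the $V_\theta(s_{t+1})$ and $V_\theta(s_t)$ terms, the TD error in brackets becomes
\[
r_t + \gamma(1+\eta)\,V_\theta(s_{t+1}) - (1+\eta)\,V_\theta(s_t)
\;=\; (1+\eta)\!\left(\frac{r_t}{1+\eta} + \gamma V_\theta(s_{t+1}) - V_\theta(s_t)\right),
\]
which is legitimate since $1+\eta\neq 0$ on the admissible range $\eta\in(-1,(1-\gamma)/(1+\gamma))$ from Theorem~\ref{thm:bsrs}. Pulling the scalar $(1+\eta)$ out in front of the step size, the shaped update reads
\[
\theta \;\leftarrow\; \theta + \big(\alpha(1+\eta)\big)\!\left(\frac{r_t}{1+\eta} + \gamma V_\theta(s_{t+1}) - V_\theta(s_t)\right)\nabla_\theta V_\theta(s_t),
\]
i.e.\ exactly the un-shaped semi-gradient TD(0) update for the MDP $\langle \s,\A,p,\,r/(1+\eta),\,\gamma\rangle$ with learning rate $\alpha(1+\eta)$.

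Finally I would argue that this per-step identity lifts to an identity of the whole training run. A TD(0) iterate depends on the data only through the sampled transition and on the model only through $V_\theta$ and $\nabla_\theta V_\theta$; since TD(0) is policy evaluation under a fixed policy, the transition stream is identical in the shaped and rescaled problems, and the function approximator $V_\theta$ is common to both. Hence, by induction on $n$, if both algorithms start from the same $\theta^{(0)}$ they generate identical parameter sequences, so the shaped algorithm and the rescaled-reward/rescaled-rate algorithm realize the same set of updates. (A Polyak-lagged bootstrap target, if used, enters identically on both sides and does not affect the argument.)

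The step I expect to require the most care is not the algebra but the bookkeeping around it: pinning down precisely what ``the same set of updates'' means, and making explicit the stop-gradient convention on the shaping term — it is this convention that yields the factor $(1+\eta)$ rather than $(1+\eta)^2$ — together with checking that the admissibility condition on $\eta$ keeps the rescaling $r\mapsto r/(1+\eta)$ well defined.
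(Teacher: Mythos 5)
Your proposal is correct and follows essentially the same route as the paper: the paper's proof defers to the update-rewriting technique of Proposition 1 in \citet{amit2020discount}, and its Appendix proof of the SARSA version carries out exactly the algebra you wrote (substitute $\Phi=\eta V$, collect terms, factor out $1+\eta$ into the learning rate and reward), with the $V_\theta$-only case producing no residual regularization term, as you found. Your explicit handling of the stop-gradient convention and the $1+\eta\neq 0$ condition matches the paper's assumptions, so there is no substantive difference to report.
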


\begin{proof}
The proof follows the same techniques as the proof of Proposition 1 in \cite{amit2020discount}, without any extra regularization terms.
\end{proof}
This proposition suggests (at least for TD(0) value learning) that BSRS is equivalent to solving an appropriately rescaled MDP. Since we are operating in the un-regularized objective setting, a reward function being rescaled by a positive constant (which is enforced by the bounds on $\eta$) leaves the optimal policy invariant.




However, our FA experiments do not employ TD(0) learning. Instead, we perform a SARSA-style update to the $Q$ network. Thus, although the previous proposition can provide some basic intuition, the true learning dynamics with self-shaping is more nuanced. Some of this nuance is captured in the following proposition, which extends the previous result from the TD setting to the SARSA setting.

\begin{proposition}[Regularized SARSA(0) Equivalence] \label{prop:sarsa}
Let $\theta$ denote the parameters of the value function $Q_\theta(s,a)$. Suppose the state-value function is calculated from $Q_\theta$ with a stop-gradient. The BSRS algorithm with shape-scale $\eta$ produces the same set of updates as an un-shaped problem setting, with a rescaled learning rate $\alpha \to \alpha(1+\eta)$, rescaled reward function $r \to r/(1+\eta)$, and regularized objective, with $\ell_2$-regularization on the advantage function.
\end{proposition}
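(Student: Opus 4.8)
The plan is to write out the BSRS update rule for $Q_\theta(s,a)$ explicitly and rearrange it until it matches, term-by-term, the SARSA(0) update for an auxiliary MDP with the claimed modifications. First I would recall that the BSRS shaped target, using the potential $\Phi^{(n)}(s) = \eta \max_a Q_\theta(s,a)$ with a stop-gradient on the state-value part (so $V(s) \doteq \texttt{sg}[\max_a Q_\theta(s,a)]$ is treated as a constant in differentiation), gives a per-sample loss whose gradient is
\begin{equation*}
    \delta = \Big(r(s,a) + \gamma V(s') - \eta V(s) - \eta V(s') \cdot 0 \Big) + \gamma\, Q_{\bar\theta}(s',a') - Q_\theta(s,a),
\end{equation*}
but care is needed: the shaping term contributes $\gamma \eta V(s') - \eta V(s)$ to the effective reward, and the bootstrap term $\gamma Q_{\bar\theta}(s',a')$ comes from the SARSA backup of the shaped $Q$. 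Collecting the $V(s')$ pieces, $\gamma V(s') + \gamma \eta V(s') = \gamma(1+\eta) V(s')$, and since under SARSA the backup uses $Q(s',a') = \widetilde Q(s',a')$ which by Theorem~\ref{thm:bsrs} relates to the unshaped quantities, I would show the $\gamma(1+\eta)$ factor can be pulled out of the entire forward-looking term.

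Next I would divide through by $(1+\eta)$ — which is positive on the admissible range of $\eta$ — to exhibit the rescaled reward $r/(1+\eta)$, and absorb the factor $(1+\eta)$ into the step size, so that one gradient step of BSRS with learning rate $\alpha$ equals one gradient step with learning rate $\alpha(1+\eta)$ on the rescaled objective. The leftover term is the crucial one: after the rescaling, there remains a contribution proportional to $Q_\theta(s,a) - V(s) = Q_\theta(s,a) - \max_{a'} Q_\theta(s,a')$, i.e. the (negative) advantage function $A_\theta(s,a)$. I would identify this residual as exactly the gradient of an $\ell_2$ penalty $\tfrac{c}{2}\|A_\theta(s,\cdot)\|^2$ with $c$ depending on $\eta$ and $\gamma$, since $\partial_\theta A_\theta(s,a) = \partial_\theta Q_\theta(s,a)$ under the stop-gradient convention (the $V$ term does not propagate gradients). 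This is what turns the plain SARSA(0) equivalence into a \emph{regularized} one.

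The key steps in order: (1) write the stop-gradient BSRS loss and its $\theta$-gradient; (2) substitute $\Phi^{(n)} = \eta V$ and expand the shaped Bellman error; (3) group the $\gamma(1+\eta)$ forward terms and the $-\eta V(s)$ term; (4) factor out $(1+\eta)$ to produce $r \to r/(1+\eta)$ and $\alpha \to \alpha(1+\eta)$; (5) recognize the residual $-\eta V(s)$-type term, combined with the $Q_\theta(s,a)$ coefficient, as $\partial_\theta$ of an $\ell_2$ advantage regularizer, citing the structure of Proposition~1 in \cite{amit2020discount} for the bookkeeping. I expect the main obstacle to be step (3)–(5): correctly tracking which occurrences of the value function carry gradients and which are frozen, so that the ``leftover'' term is cleanly the advantage (and not, say, $Q$ itself or a mix), and verifying that the sign and magnitude of its coefficient are consistent with a genuine $\ell_2$ penalty rather than an indefinite quadratic form. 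A secondary subtlety is that SARSA uses the on-policy action $a'$ rather than a max, so I would confirm the telescoping/relabeling of the shaping term still goes through in the SARSA backup exactly as it does for the $Q$-learning operator $\T$ in Theorem~\ref{thm:bsrs}.
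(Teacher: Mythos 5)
Your plan follows essentially the same route as the paper's proof: write the one-step parameter update for the shaped problem, substitute $\Phi(s)=\eta V(s)$, collect the forward terms into $\gamma(1+\eta)V(s')$, factor out $(1+\eta)$ to obtain $\widetilde{\alpha}=\alpha(1+\eta)$ and $\widetilde{r}=r/(1+\eta)$, and recognize the leftover $\tfrac{\eta}{1+\eta}\bigl(Q_\theta(s,a)-V(s)\bigr)\nabla_\theta Q_\theta(s,a)$ as $\lambda\nabla_\theta A^2_\theta(s,a)$ with $\lambda=\tfrac{1}{2}\tfrac{\eta}{1+\eta}$, using $\nabla_\theta A_\theta=\nabla_\theta Q_\theta$ under the stop-gradient --- exactly the bookkeeping of Proposition~1 in \cite{amit2020discount} that the paper reuses. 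The one concrete issue is your bootstrap term: you write it as $\gamma Q_{\bar\theta}(s',a')$ with an on-policy $a'$, yet your step ``collect the $V(s')$ pieces into $\gamma(1+\eta)V(s')$'' implicitly requires the bootstrap to equal $V(s')$; for a genuinely non-greedy $a'$ the factorization leaves an extra next-state term proportional to $Q(s',a')-V(s')$, and the ``relabeling'' you defer to the end does not rescue it. The paper sidesteps this entirely: the update it analyzes is $r+\gamma\Phi(s')-\Phi(s)+\gamma V(s')-Q_\theta(s,a)$, i.e.\ the backup itself uses the stop-gradient state value $V(s')$, so the subtlety is dissolved by the definition of the update rather than proved. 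With that reading, your steps (1)--(5) go through and reproduce the paper's derivation line for line (your initial expression for $\delta$, which lists both $\gamma V(s')$ and $\gamma Q_{\bar\theta}(s',a')$, should be cleaned up accordingly); your side question about whether the quadratic term is a bona fide penalty is fair, but the paper simply reports the $\ell_2$ form with coefficient $\lambda$ and does not address its sign further.
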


\subsubsection{Function Approximation for $\Phi$}
Interestingly, we find that using the online, as opposed to target, network for calculating the potential drastically improves performance. This contradicts the expectation that the ``stability'' introduced by the target network should be beneficial for calculating a stable potential function. Rather, we find empirically that it is better to use the more rapidly updated online network to calculate $\Phi$. This is also in contradiction to Proposition~\ref{prop:sarsa}, which assumed the target network is used to calculate the potential. Nevertheless, the proof in the Appendix gives (in the penultimate line) the regularization term $ A_\theta(s,a) \nabla_\theta Q_\theta$ when an online network is used for calculating the potential function.

\section{Experiments}
In our experiments, we consider the ``self-shaped'' version of the vanilla value-based algorithm DQN~\cite{mnih2015human, stable-baselines3}. We evaluate the performance of our self-shaped algorithm (BSRS) on a variety of environments in a tabular setting, the Atari suite, and on a continuous control task with TD3~\cite{fujimoto2018addressing}.

\subsection{Tabular Setting}
In the tabular setting, we can exactly solve the MDP and compare the shaped results to un-shaped learning curves and value functions for verification of the theory.
We find that the $Q$-values diverge near the correct boundaries of $\eta$, but our experiments suggest that the allowed range for $\eta$ can potentially be expanded (in the positive direction) beyond the values given above.

We plot the performance (in terms of number of steps until convergence) in Figure~\ref{fig:eta-steps}. The shaded region indicates the standard deviation across 5 random initializations of the $Q$-table. The inset plot shows the $7\times 7$ gridworld with sparse rewards used for this experiment (arrows indicate the optimal policy). Interestingly, we find that for much larger values of $\eta$ ($\approx 3$ times larger than the allowed maximum value) the performance continually improves, with an optimal value achieving roughly $20\%$ reduction in the number of required steps. The relative location of the optimal performance is robust to various environments, stochasticity, and discount factors. This phenomenon seems loosely analogous to the discussion on learning at the ``Edge of Stability'' in recent literature~\cite{cohen2021gradient, ahn2024learning}, but the exact connection is still unclear.

\begin{figure}[h]
  \begin{center}
    \includegraphics[width=0.45\textwidth]{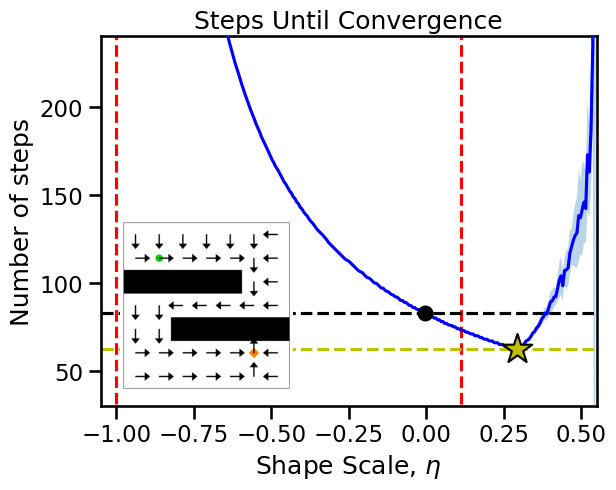}
  \end{center}
    \caption{Upon solving the self-shaped Bellman equation in the tabular setting (for environment at inset, $\gamma=0.8$), we find that increasingly large (even beyond the proven range, shown with red dashed lines) values of $\eta$ allows for improved performance. We define ``convergence'' in this case as the point at which error between iterates in the sup-norm falls below $10^{-6}$. On the $y$-axis, ``steps'' refers to the number of applications of the self-shaped Bellman operator before convergence. Each point on the curve corresponds to an average over $20$ random initializations.}
    \label{fig:eta-steps}
\end{figure}

\subsection{Continuous Setting}
For more complex environments, we use the Arcade Learning Environment suite~\cite{ALE}. The results of these experiments are shown in Figures~\ref{fig:env-comparison} and~\ref{fig:rwd-curves}. Across many environments, BSRS leads to an improvement over the baseline DQN performance ($\eta=0$). In $21/40$ environments we find an improvement of more than $10\%$ and $5/40$ environments show an improvement of over $100\%$. In only $6/40$ environments does self-shaping have a negative impact. If $\eta$ is to be tuned over, one can simply choose $\eta=0$ for these environments. In Figure~\ref{fig:rwd-curves} we find an improvement in the aggregate reward curves of $45\%\to60\%$ human normalized score. We find that the value of $\eta$ can only be increased up until a point ($\eta\approx2$) until the performance deteriorates. Although our Theorem~\ref{thm:bsrs} suggests that $\eta$ can be negative, we found that the performance in this regime is even worse than that observed for $\eta=10$. Overall, we find a substantial speedup at small times ($\sim{}1.5$M steps) and a lasting improvement in rewards at long times for multiple $\eta$ values.

To test BSRS in the continuous action setting, we use Pendulum-v1 by extending an implementation of TD3~\cite{stable-baselines3}. Since TD3 directly maintains an estimate of both the policy and the value function, the latter is used to derive the dynamic shaping potential, $\Phi(s)$. In principle, one must maximize over a continuous set of actions at each timestep to calculate $\Phi$, which becomes intractable, so we instead use the action sampled by the actor. The addition of BSRS to TD3 shows promising improvements over the baseline ($\eta=0$) with a notably more robust performance over a range of large $\eta$ values: ~Fig.~\ref{fig:td3-pend} and inset. We also show the result for $\eta<0$ in the inset, which leads to worse performance in all the FA settings considered. Further experiments are shown in the Appendix.

\begin{figure}[t]
    \centering
    \includegraphics[width=0.45\textwidth]{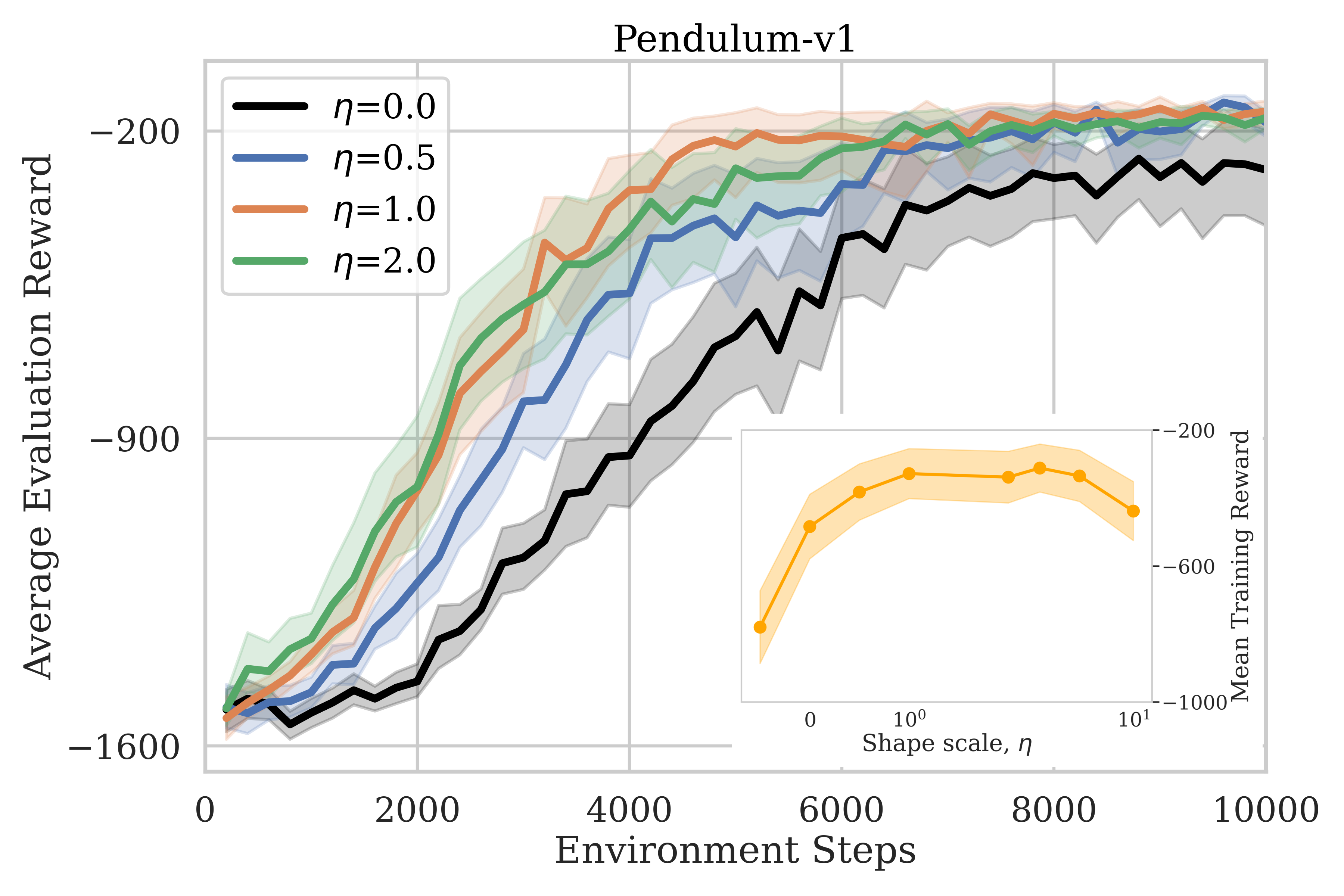}
    \caption{TD3 with BSRS is tested on a continuous control task, Pendulum-v1. {Inset: Robustness to varying~$\eta$. Results for each $\eta$ value were averaged over $20$ runs (standard error indicated by shaded region).}} 
    \label{fig:td3-pend}
\end{figure}

\section{Discussion}

In this work, we provided a theoretically-grounded choice for a dynamic potential function. This work fills the gap in existing literature by providing a universally applicable form for the potential function in any environment. Notably, rather than attempting to tune over the \textit{function class} $\Phi:\s\to\R$, we instead suggest to tune over the significantly simpler \textit{scalar class} $\eta\in\R$. This idea simplifies the problem of choosing a potential function from a high-dimensional search problem to a single hyperparameter optimization. 

Future work can naturally extend the results presented. For instance, one may study techniques to learn the optimal value of $\eta$ over time, perhaps analogous to the method of learning the $\alpha$ value in \cite{haarnoja2018soft}. Further theoretical work can be pursued for understanding the convergence properties of BSRS. For instance, it appears numerically that values of $\eta$ beyond the proven bounds can be used. Also, it is straightforward to see in the proof of Theorem~\ref{thm:bsrs} that all instances of $\eta$ may be replaced with the functional $\eta(s)$, giving further control over the self-shaping mechanism. Future work may study such state-dependent shape scales, e.g. dependent on visitation frequencies or the loss experienced in such states (cf.~\cite{wang2024efficient}), which can further connect to the problem of exploration.

Overall, our work provides a practically relevant implementation of PBRS which provides an advantage in training for both tabular and deep RL.

\section{Acknowledgements}
JA would like to acknowledge the use of the supercomputing facilities managed by the Research Computing Department at UMass Boston; the Unity high-performance computing cluster; and funding support from the Alliance Innovation Lab – Silicon Valley. RVK and JA would like to acknowledge funding support from the NSF through Award No. DMS-1854350 and PHY-2425180.
ST was supported in part by NSF Award (2246221), PAZY grant (195-2020), and WCoE, {Texas~Tech~U.} This work is supported by the National Science Foundation under Cooperative Agreement PHY-2019786 (The NSF AI Institute for Artificial Intelligence and Fundamental Interactions, http://iaifi.org/).





\bibliography{aaai25}
\iftrue
\onecolumn
\appendix
\section{Proofs}
\subsection{Proof of Remark~\ref{rmk:static-pot}}

\begin{proof}
We will proceed by contradiction. Suppose there exists a static function $\Psi(s)$ such that all iterations of BSRS are in agreement with the fixed potential $\Psi(s)$. Then,
at steps $n-1$ and $n$ of training (two applications of the Bellman operator), the following equations must agree for some choice of $\Psi(s)$:

\begin{align}
    &r(s,a) + \gamma \max_{a'} \left\{ r(s',a') + \gamma \left(V^{(n-1)}(s'') + \Psi(s'') \right) - \Psi(s') \right\} +\gamma \Psi(s') - \Psi(s)  \\
    &r(s,a) + \gamma \underbrace{\max_{a'} \left\{ r(s',a') + \gamma \left((1+\eta) V^{(n-1)}(s'') \right) - \eta V^{(n-1)}(s') \right\}}_{V^{(n)}(s')} +\gamma \eta V^{(n)}(s') - \eta V^{(n)}(s) 
\end{align}
This is equivalent to the following equations agreeing (simplifying shared terms and canceling subsequent potentials in the static case):

\begin{align}
    & \gamma \max_{a'} \left\{ r(s',a') + \gamma \left(V^{(n-1)}(s'') + \Psi(s'') \right) \right\} - \Psi(s)  \\
    &\gamma \max_{a'} \left\{ r(s',a') + \gamma \left((1+\eta) V^{(n-1)}(s'') \right) - \eta V^{(n-1)}(s') \right\} +\gamma \eta V^{(n)}(s') - \eta V^{(n)}(s)
\end{align}
Since the reward function is arbitrary and the equation must hold for all states, the inner terms must agree:

\begin{equation}
    V^{(n-1)}(s'') + \Psi(s'') = (1+\eta) V^{(n-1)}(s''),
\end{equation}
and so we must have the relation

\begin{equation}
     \Psi(s) = \eta V^{(n-1)}(s),
\end{equation}
which is inconsistent as a \textit{static} potential function, and also is in disagreement with the relation implied by the outer terms:
\begin{equation}
    \Psi(s)=-\gamma \eta V^{(n-1)}(s')+\gamma \eta V^{(n)}(s') - \eta V^{(n)}(s).
\end{equation}
Explicitly, we now see the disconnect is caused by the difference between subsequent value functions:
\begin{equation}
    \gamma \eta \left(V^{(n)}(s') - V^{(n-1)}(s') \right) \neq 0 ,
\end{equation}
which controls the error in the constant-potential function.
\end{proof}

\subsection{Proof of Proposition~\ref{prop:sarsa}}
\begin{proof}
We write out the difference between successive parameters, using the same proof technique as in~\cite{amit2020discount}:
\begin{align*}
    \theta'-\theta &= \alpha \nabla_\theta Q_\theta(s,a) \left[r+\gamma \Phi(s') - \Phi(s) + \gamma V(s') - Q_\theta(s,a) \right] \\
    &= \alpha \nabla_\theta Q_\theta(s,a) \left[r+\gamma \eta V(s') - \eta V(s) + \gamma V(s') - Q_\theta(s,a) \right] \\
    &= \alpha(1+\eta) \nabla_\theta Q_\theta(s,a) \left[\frac{r}{1+\eta}+\gamma V(s') - \frac{\eta}{1+\eta} V(s) - \frac{Q_\theta(s,a)}{1+\eta} \right] \\
    &= \widetilde{\alpha} \biggr( \nabla_\theta Q_\theta(s,a) \left[\widetilde{r}+\gamma V(s') -Q_\theta(s,a) \right] + \nabla_\theta Q_\theta(s,a) \left[- \frac{\eta}{1+\eta} V(s) + \frac{\eta}{1+\eta} Q_\theta(s,a) \right] \biggr) \\
    &= \widetilde{\alpha} \biggr( \nabla_\theta Q_\theta(s,a) \left[\widetilde{r}+\gamma V(s') -Q_\theta(s,a) \right] + \frac{\eta}{1+\eta}A_\theta(s,a) \nabla_\theta Q_\theta(s,a)  \biggr)\\
    &= \widetilde{\alpha} \biggr( \nabla_\theta Q_\theta(s,a)     \left[\widetilde{r}+\gamma V(s') -Q_\theta(s,a) \right] + \lambda \nabla_\theta A^2_\theta(s,a) \biggr)
\end{align*}
We denote the regularization coefficient as $\lambda \doteq \frac{1}{2} \frac{\eta}{1+\eta}$.
The last line follows if we assume that the gradient has no effect on the state value function (i.e. $V(s)$ is calculated via the target network or a stop-gradient operation is used):
\begin{equation}
    \nabla_\theta A_\theta(s,a) =\nabla_\theta \left(Q_\theta(s,a) - V(s)\right) = \nabla_\theta Q_\theta(s,a).
\end{equation}

\end{proof}
\section{Experiment Details}
We extend DQN and TD3 from Stable-baselines3~\cite{stable-baselines3} to include BSRS, by changing the definition of reward values before the target values are calculated.
\subsection{Hyperparameters}
For the Atari environments, we use all the hyperparameters from~\cite{mnih2015human}. Each run is trained for $10$M steps. All runs are averaged over $5$ random initializations in every environment. This yields a compute cost of roughly $10^7$ steps/run $\times 5$ runs/env. $\times 40$ envs. $\times 7$ shape-scales $\times 1/100$ sec./step $\approx 1600$ GPU-days. Training was performed across clusters with various resources, including A100s, V100s, and RTX 30 \& 40 series GPUs.
We use the standard set of Atari wrappers. Specifically, this include no-op reset, frame skipping (4 frames), max-pooling of two most recent observations, termination signal when a life is lost, resize to $84\times 84$, grayscale observation, clipped reward to $\{-1, 0, 1\}$, frame stacking (4 frames) and image transposing.
\begin{table}[h!]
\centering
\begin{tabular}{|l|c|}
\hline
\textbf{Parameter} & \textbf{Value} \\ \hline
Batch Size & $32$ \\ \hline
Buffer Size & $100,000$ \\ \hline
Discount Factor ($\gamma$) & $0.99$ \\ \hline
Gradient Steps & $1$ \\ \hline
Learning Rate & $0.0001$ \\ \hline
Target Update Interval & $1,000$ \\ \hline
Train Frequency & $4$ \\ \hline
Learning Starts & $50,000$ \\ \hline
Exploration Fraction & $0.1$ \\ \hline
Exploration Initial $\epsilon$ & $1.0$ \\ \hline
Exploration Final $\epsilon$ & $0.02$ \\ \hline
Discount Factor ($\gamma$) & $0.99$ \\ \hline
Total Timesteps & $10,000,000$ \\ \hline
\end{tabular}
\caption{Shared hyperparameter settings used in our ALE experiments for all values of $\eta \in \{0.0, 0.5, 1.0, 2.0, 3.0, 5.0, 10.0\}$.}
\label{tab:hyperparams}
\end{table}

For the Pendulum environment, we use TD3's default hyperparameters with a few modifications, based on hyperparameters published by~\cite{stable-baselines3}:
\begin{table}[h!]
\centering
\begin{tabular}{|l|c|}
\hline
\textbf{Parameter} & \textbf{Value} \\ \hline
Batch Size & $256$ \\ \hline
Buffer Size & $200,000$ \\ \hline
Discount Factor ($\gamma$) & $0.98$ \\ \hline
Gradient Steps & $-1$ \\ \hline
Learning Rate & $0.001$ \\ \hline
Learning Starts & $0$ \\ \hline
Train Frequency & $1$ \\ \hline
Policy Delay & $2$ \\ \hline
Target Policy Noise & $0.2$ \\ \hline
Target Noise Clip & $0.5$ \\ \hline
\end{tabular}
\caption{TD3 hyperparameters for Pendulum-v1 for all values of $\eta \in \{-0.5, 0.0, 0.5, 1.0, 2.0, 3.0, 5.0, 10.0\}$.}
\label{tab:additional_hyperparams}
\end{table}

Each run is trained for $20$k steps. All runs are averaged over $20$ random initializations in every environment. This yields a compute cost of roughly $10^4$ steps/run $\times 20$ runs/env. $\times 8$ shape-scales $\times 1/100$ sec./step $\approx 6$ GPU-hours. Training was performed locally with a single RTX 3080.
\clearpage
\section{Additional Experiments}
\begin{figure}[h]
    \centering
    \includegraphics[width=0.5\textwidth]{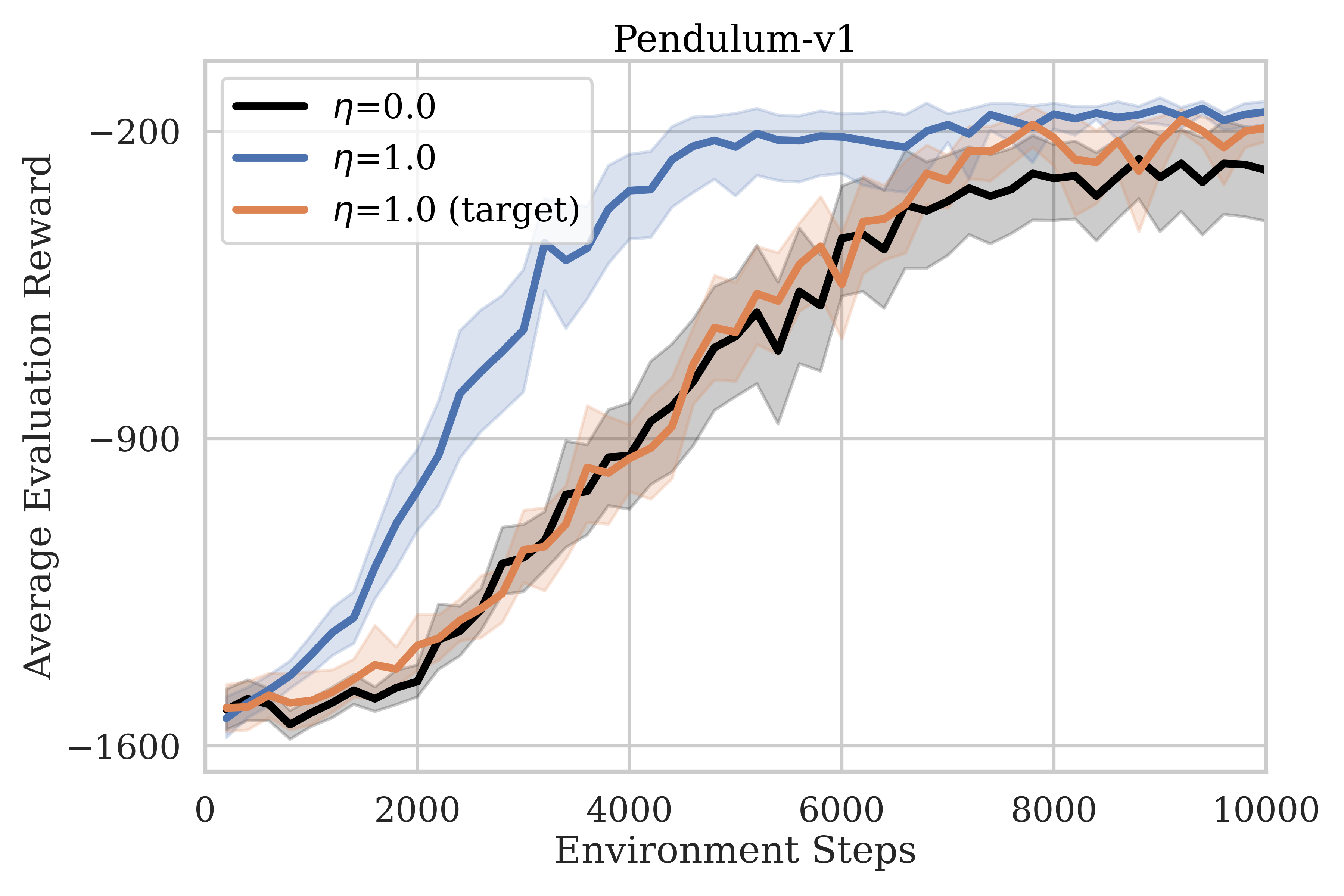}
    \caption{We also test using the target network to calculate $\Phi(s)$, rather than the online critic network. We find that despite the initial intuition that a target network may help stabilize the otherwise fast-changing reward structure, the use of the online network to derive $\Phi(s)$ is crucial to see a (positive) performance difference, at least in the settings considered here.}
    \label{fig:targ}
\end{figure}

\begin{figure}[h]
\centering
\begin{subfigure}{.5\textwidth}
  \centering
  \includegraphics[width=.9\linewidth]{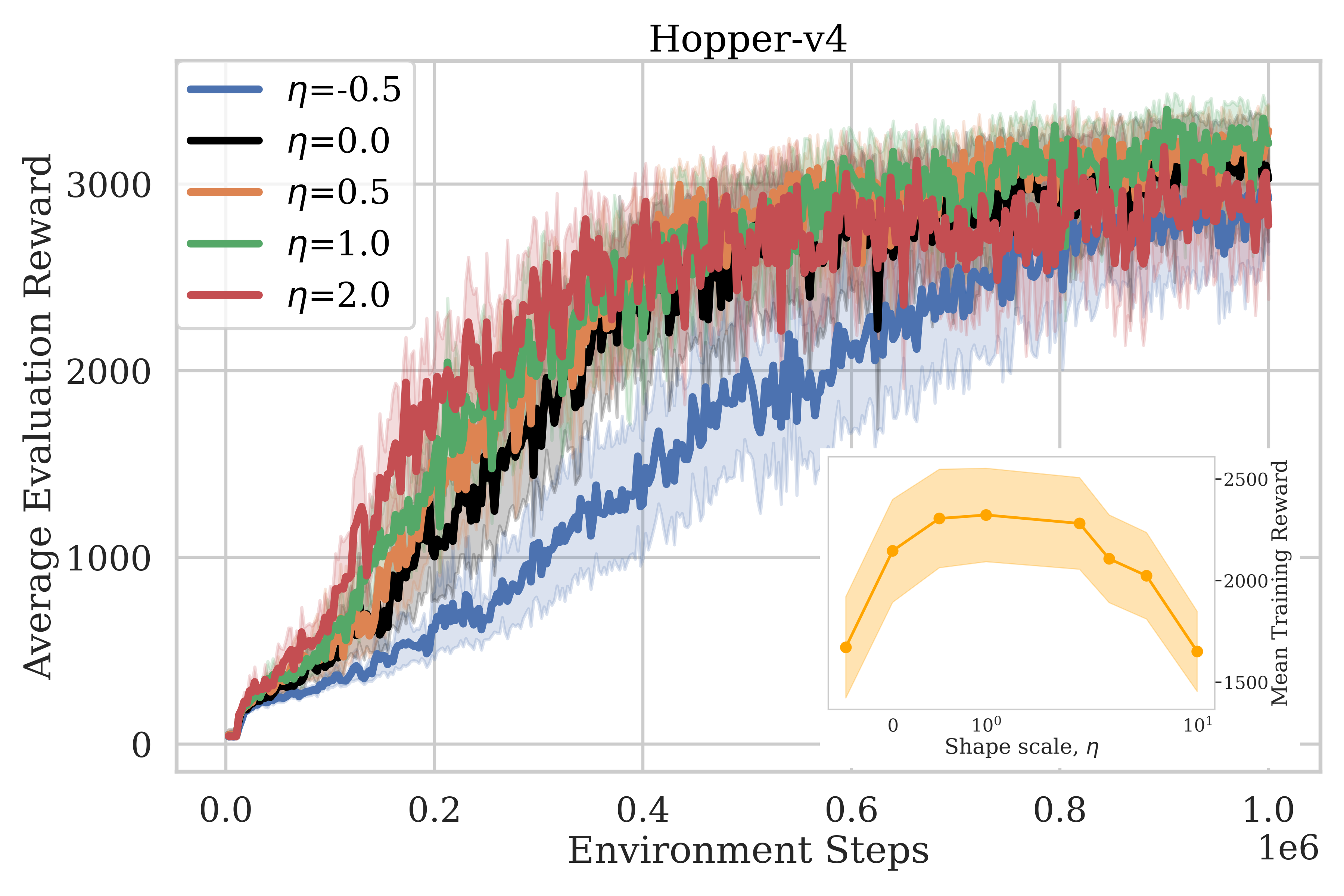}
  \caption{TD3 with BSRS on Hopper-v4.}
  \label{fig:sub1}
\end{subfigure}%
\begin{subfigure}{.5\textwidth}
  \centering
  \includegraphics[width=.9\linewidth]{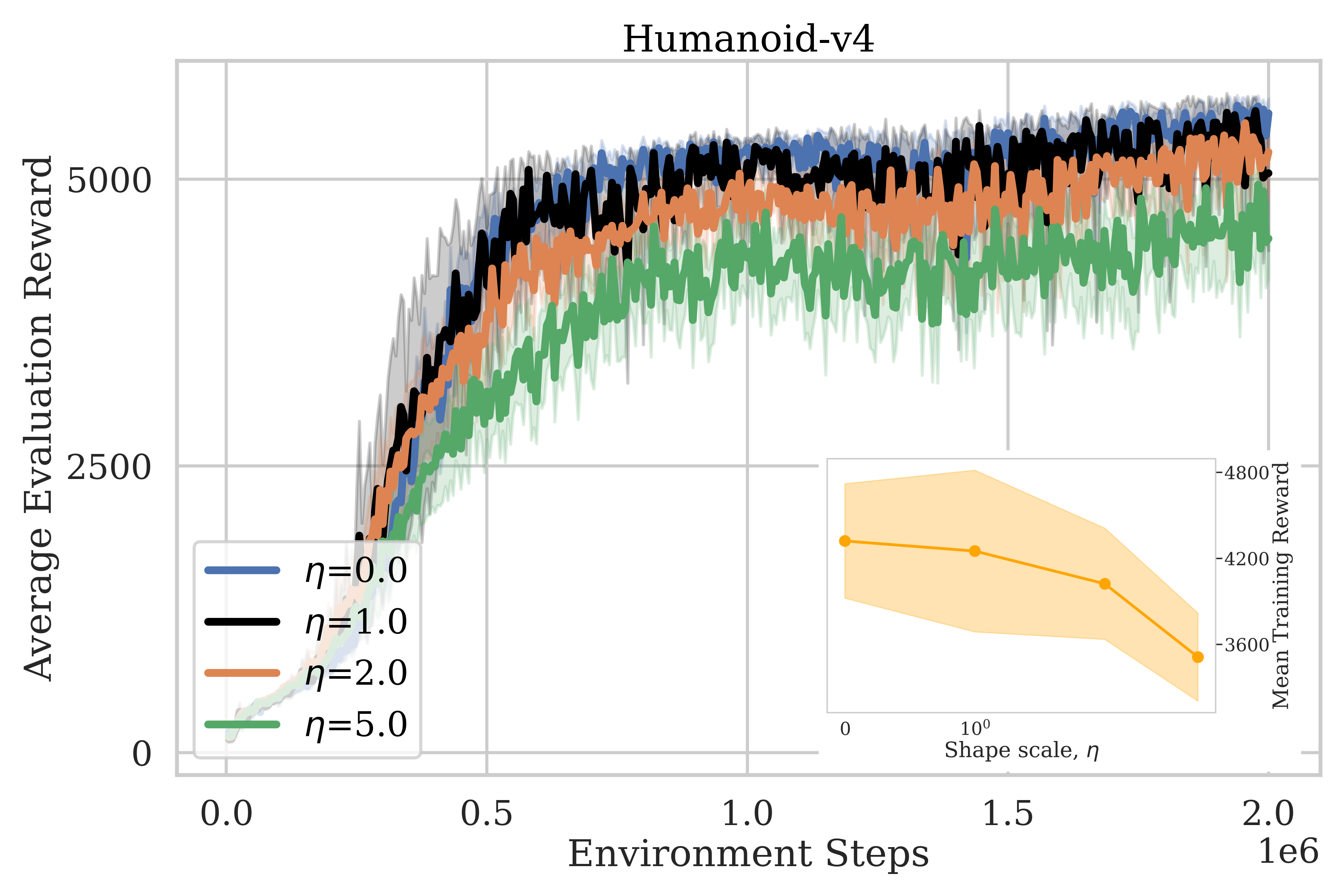}
  \caption{TD3 with BSRS on Humanoid-v4.}
  \label{fig:sub2}
\end{subfigure}
\caption{We also test BSRS on TD3 for more complex continuous action tasks. The performance is slightly better (at least for Hopper-v4) for an optimized value of $\eta$, but potentially within the margin of statistical significance. The hyperparameters are the same as those of Pendulum (shown in Table~\ref{tab:additional_hyperparams}) except for the learning rate of $3 \times 10^{-4}$ and an updated ``learning starts'' value of $10,000$ steps. For Humanoid-v4, we did not find a statistically significant improvement for the values of $\eta$ tested; however, we believe this may be improved in future work. As discussed in the main text, ideas for further improvement include: further hyperparameter tuning, state-dependent scale parameter $\eta(s)$, and dynamic schedules for the scaling parameter. Training was distributed locally and on clusters (with RTX 20, 30, and 40 series), totaling roughly 400 GPU-hours of compute.}
\label{fig:test}
\end{figure}

\fi
\end{document}